\newtheorem{definition}{Definition}
\newtheorem{theorem}{Theorem}
\begin{document}
	% The file aaai.sty is the style file for AAAI Press 
	% proceedings, working notes, and technical reports.
	%
	\title{On the Depth of Deep Neural Networks: A Theoretical View}
	\author{Shizhao Sun$^{1,}$\thanks{This work was done when the author was visiting Microsoft Research Asia.}, Wei Chen$^2$, Liwei Wang$^3$, Xiaoguang Liu$^1$ \and Tie-Yan Liu$^2$ \\
	$^1$College of Computer and Control Engineering, Nankai University, Tianjin, 300071, P. R. China\\
	$^2$Microsoft Research, Beijing, 100080, P. R. China \\
	$^3$Key Laboratory of Machine Perception (MOE), School of EECS, Peking University, Beijing, 100871, P. R. China \\
	sunshizhao@mail.nankai.edu.cn, wche@microsoft.com, wanglw@cis.pku.edu.cn\\
	liuxg@nbjl.nankai.edu.cn, tyliu@microsoft.com
	}
	\maketitle
	\begin{abstract}
		People believe that depth plays an important role in success of deep neural networks (DNN). However, this belief lacks solid theoretical justifications as far as we know. We investigate role of depth from perspective of margin bound. In margin bound, expected error is upper bounded by empirical margin error plus Rademacher Average (RA) based capacity term.  First, we derive an upper bound for RA of DNN, and show that it increases with increasing depth. This indicates negative impact of depth on test performance. Second, we show that deeper networks tend to have larger representation power (measured by Betti numbers based complexity) than shallower networks in multi-class setting, and thus can lead to smaller empirical margin error. This implies positive impact of depth. The combination of these two results shows that for DNN with restricted number of hidden units, increasing depth is not always good since there is a tradeoff between positive and negative impacts. These results inspire us to seek alternative ways to achieve positive impact of depth, e.g., imposing margin-based penalty terms to cross entropy loss so as to reduce empirical margin error without increasing depth. Our experiments show that in this way, we achieve significantly better test performance.
	\end{abstract}
	\section{Introduction}
	\label{introduction}
	Deep neural networks (DNN) have achieved great practical success in many machine learning tasks, such as speech recognition, image classification, and natural language processing~\cite{hinton2006reducing,krizhevsky2012imagenet,hinton2012deep,ciresan2012multi,weston2012deep}. Many people believe that the depth plays an important role in the success of DNN ~\cite{srivastava2015training,simonyan2014very,lee2014deeply,romero2014fitnets,he2015delving,szegedy2014going}. However, as far as we know, such belief is still lacking solid theoretical justification.
	
	On one hand, some researchers have tried to understand the role of depth in DNN by investigating its generalization bound. For example, in~\cite{bartlett1998almost,karpinski1995polynomial,goldberg1995bounding}, generalization bounds for multi-layer neural networks were derived based on Vapnik-Chervonenkis (VC) dimension. In~\cite{bartlett1998sample,koltchinskii2002empirical}, a margin bound was given to fully connected neural networks in the setting of binary classification. In~\cite{neyshabur2015norm}, the capacity of different norm-constrained feed-forward networks was investigated. While these works shed some lights on the theoretical properties of DNN, they have limitations in helping us understand the role of depth, due to the following reasons. First, the number of parameters in many practical DNN models could be very large, sometimes even larger than the size of training data. This makes the VC dimension based generalization bound too loose to use. Second, practical DNN are usually used to perform multi-class classifications and often contains many convolutional layers, such as the model used in the tasks of ImageNet~\cite{deng2009imagenet}. However, most existing bounds are only regarding binary classification and fully connected networks. Therefore, the bounds cannot be used to explain the advantage of using deep neural networks.
	
	On the other hand, in recent years, researchers have tried to explain the role of depth from other angles, e.g., deeper neural networks are able to represent more complex functions. In~\cite{hastad1986almost,delalleau2011shallow}, authors showed that there exist families of functions that can be represented much more efficiently with a deep logic circuit or sum-product network than with a shallow one, i.e., with substantially fewer hidden units. In~\cite{bianchini2014complexity,montufar2014number}, it was demonstrated that deeper nets could represent more complex functions than shallower nets in terms of maximal number of linear regions and Betti numbers. However, these works are apart from the generalization of the learning process, and thus they cannot be used to explain the test performance improvement for DNN. 
	
	To reveal the role of depth in DNN, in this paper, we propose to investigate the margin bound of DNN. According to the margin bound, the expected $0$-$1$ error of a DNN model is upper bounded by the empirical margin error plus a Rademacher Average (RA) based capacity term. Then we first derive an upper bound for the RA-based capacity term, for both fully-connected and convolutional neural networks in the multi-class setting. We find that with the increasing depth, this upper bound of RA will increase, which indicates that depth has its negative impact on the test performance of DNN. Second, for the empirical margin error, we study the representation power of deeper networks, because if a deeper net can produce more complex classifiers, it will be able to fit the training data better w.r.t. any margin coefficient. Specifically, we measure the representation power of a DNN model using the Betti numbers based complexity~\cite{bianchini2014complexity}, and show that, in the multi-class setting, the Betti numbers based complexity of deeper nets are indeed much larger than that of shallower nets. This, on the other hand, implies the positive impact of depth on the test performance of DNN. By combining these two results, we can come to the conclusion that for DNN with restricted number of hidden units, arbitrarily increasing the depth is not always good since there is a clear tradeoff between its positive and negative impacts. In other words, with the increasing depth, the test error of DNN may first decrease, and then increase. This pattern of test error has been validated by our empirical observations on different datasets. 
	
	The above theoretical findings also inspire us to look for alternative ways to achieve the positive impact of depth, and avoid its negative impact. For example, it seems feasible to add a margin-based penalty term to the cross entropy loss of DNN so as to directly reduce the empirical margin error on the training data, without increasing the RA of the DNN model. For ease of reference, we call the algorithm minimizing the penalized cross entropy loss \emph{large margin DNN} (LMDNN)\footnote{One related work is \cite{li2015max}, which combines the generative deep learning methods (e.g., RBM) with a margin-max posterior. In contrast, our approach aims to enlarge the margin of discriminative deep learning methods like DNN.}. We have conducted extensive experiments on benchmark datasets to test the performance of LMDNN. The results show that LMDNN can achieve significantly better test performance than standard DNN. In addition, the models trained by LMDNN have smaller empirical margin error at almost all the margin coefficients, and thus their performance gains can be well explained by our derived theory.
	
	The remaining part of this paper is organized as follows. In Section~\ref{sec:def}, we give some preliminaries for DNN. In Section~\ref{sec:generalization}, we investigate the roles of depth in RA and empirical margin error respectively. In Section~\ref{sec:alg}, we propose the large margin DNN algorithms and conduct experiments to test their performances. In Section~\ref{sec:conclusion}, we conclude the paper and discuss some future works.

	\section{Preliminaries}\label{sec:def}
	Given a multi-class classification problem, we denote $\mathcal{X}=\mathbb{R}^d$ as the input space, $\mathcal{Y}=\{1,\cdots,K\}$ as the output space, and $P$ as the joint distribution over $\mathcal{X}\times\mathcal{Y}$. Here $d$ denotes the dimension of the input space, and $K$ denotes the number of categories in the output space. We have a training set $S=\{(x_1,y_1),\cdots,(x_m,y_m)\}$, which is i.i.d. sampled from $\mathcal{X}\times\mathcal{Y}$ according to distribution $P$. The goal is to learn a prediction model $f\in\mathcal{F}:\mathcal{X}\times\mathcal{Y}\to \mathbb{R}$ from the training set, which produces an output vector $(f(x,k);k\in\mathcal{Y})$ for each instance $x\in\mathcal{X}$ indicating its likelihood of belonging to category $k$. Then the final classification is determined by $\arg\max_{k\in\mathcal{Y}} f(x,k)$. This naturally leads to the following definition of the \emph{margin} $\rho(f;x,y)$ of the model $f$ at a labeled sample $(x,y)$:
	\begin{small}
		\begin{equation}
		\rho(f;x,y)=f(x,y)-\max_{k\neq y} f(x,k).
		\end{equation}
	\end{small}
	The classification accuracy of the prediction model $f$ is measured by its expected $0$-$1$ error, i.e.,
	\begin{small}
		\begin{align}
		err_P(f)&=\Pr_{(x,y)\sim P}\mathbb{I}_{[\arg\max_{k\in \mathcal{Y}}f(x,k)\neq y]}\\
		&=\Pr_{(x,y)\sim P}\mathbb{I}_{[\rho(f;x,y)< 0]},
		\end{align}
	\end{small}where $\mathbb{I}_{[\cdot]}$ is the indicator function.
	
	We call the $0$-$1$ error on the training set \emph{training error} and that on the test set \emph{test error}. Since the expected $0$-$1$ error cannot be obtained due to the unknown distribution $P$, one usually uses the test error as its proxy when examining the classification accuracy.
	
	Now, we consider using neural networks to fulfill the multi-class classification task. Suppose there are $L$ layers in a neural network, including $L-1$ hidden layers and an output layer. There are $n_l$ units in layer $l$ ($l=1,\dots,L$). The number of units in the output layer is fixed by the classification problem, i.e., $n_L=K$. There are weights associated with the edges between units in adjacent layers of the neural network. To avoid over fitting, people usually constraint the size of the weights, e.g., impose a constraint $A$ on the sum of the weights for each unit. We give a unified formulation for both fully connected and convolutional neural networks. Mathematically, we denote the function space of multi-layer neural networks with depth $L$, and weight constraint $A$ as $\mathcal{F}_A^L$, i.e.,
	\begin{small}
		\begin{align}\nonumber
		\mathcal{F}_A^L=\Big\{(x,k)\to\sum_{i=1}^{n_{L-1}}w_if_i(x); f_i\in\mathcal{F}_A^{L-1},\\ 
		\sum_{i=1}^{n_{L-1}}|w_i|\le A,w_i\in \mathbb{R}\Big\};
		\end{align}
	\end{small} 
	for $l=1,\cdots,L-1$,
	\begin{small}
		\begin{align}\nonumber
		\mathcal{F}_A^l=\Big\{x\to\varphi\Big(\phi(f_1(x)),\cdots,\phi(f_{p_l}(x))\Big);\\\label{eqn:pooling}
		f_1,\cdots,f_{p_l}\in\bar{\mathcal{F}}_A^l\Big\},
		\end{align}
		\begin{equation}\label{eqn:weight_sum}
		\bar{\mathcal{F}}_A^l=\Big\{x \to \sum_{i=1}^{n_{l-1}}w_if_i(x);f_{i}\in\mathcal{F}_A^{l-1},\sum_{i=1}^{n_{l-1}}|w_{i}|\le A, w_i \in \mathbb{R}\Big\};
		\end{equation}
	\end{small} 
	and,
	\begin{small}
		\begin{equation}
		\mathcal{F}_A^0=\Big\{x\to x_{|i}; i\in\{1,\cdots,d\} \Big\};
		\end{equation}
	\end{small} 
	where $w_i$ denotes the weight in the neural network, $x_{|i}$ is the $i$-th dimension of input $x$. The functions $\varphi$ and $\phi$ are defined as follows:
	
	(1) If the $l$-th layer is a convolutional layer, the outputs of the $(l-1)$-th layer are mapped to the $l$-th layer by means of filter, activation, and then pooling. That is, in Eqn (\ref{eqn:weight_sum}), lots of weights equal $0$, and $n_l$ is determined by $n_{l-1}$ as well as the number and domain size of the filters. In Eqn (\ref{eqn:pooling}), $p_{l}$ equals the size of the pooling region in the $l$-th layer, and function $\varphi:\mathbb{R}^{p_l}\to\mathbb{R}$ is called the \emph{pooling function}. Widely-used pooling functions include the max-pooling $\max(t_1,\cdots,t_{p_l})$ and the average-pooling $(t_1+\cdots+t_{p_l})/p_l$. Function $\phi$ is increasing and usually called the \emph{activation function}. Widely-used activation functions include the standard sigmoid function $\phi(t)=\frac{1}{1+e^{-t}}$, the tanh function $\phi(t)=\frac{e^t-e^{-t}}{e^t+e^{-t}}$, and the rectifier function $\phi(t)=\max(0,t)$. Please note that all these activation functions are $1$-Lipschitz.
	
	(2) If the $l$-th layer is a fully connected layer, the outputs of the $(l-1)$-th layer are mapped to the $l$-th layer by linear combination and subsequently activation. That is, in Eqn (\ref{eqn:pooling}) $p_{l}=1$ and $\varphi(x)=x$.
	
	Because distribution $P$ is unknown and the $0$-$1$ error is non-continuous, a common way of learning the weights in the neural network is to minimize the empirical (surrogate) loss function. A widely used loss function is the cross entropy loss, which is defined as follows,
	\begin{small}
		\begin{equation}
		C(f;x,y)=-\sum_{k=1}^K z_{k} \ln \sigma(x,k),
		\end{equation}
	\end{small}where $z_{k}=1$ if $k=y$, and $z_{k}=0$ otherwise. Here $\sigma(x,k)=\frac{\exp(f(x,k))}{\sum_{j=1}^{K}\exp(f(x,j))}$ is the softmax operation that normalizes the outputs of the neural network to a distribution.
	
	Back-propagation algorithm is usually employed to minimize the loss functions, in which the weights are updated by means of stochastic gradient descent (SGD).

	\section{The Role of Depth in Deep Neural Networks}
	\label{sec:generalization}
	In this section, we analyze the role of depth in DNN, from the perspective of the margin bound. For this purpose, we first give the definitions of empirical margin error and Rademacher Average (RA), and then introduce the margin bound for multi-class classification.
	\begin{definition}
		Suppose $f\in\mathcal{F}:\mathcal{X}\times\mathcal{Y}\to\mathbb{R}$ is a multi-class prediction model. For $\forall \gamma>0$, the empirical margin error of $f$ at margin coefficient $\gamma$ is defined as follows:
		\begin{small}
			\begin{equation}
			err_S^\gamma(f)=\frac{1}{m}\sum_{i=1}^m \mathbb{I}_{[\rho(f;x_i,y_i) \le\gamma]}.
			\end{equation}
		\end{small} 
	\end{definition} 
	\begin{definition}
		Suppose $\mathcal{F}:\mathcal{X}\to\mathbb{R}$ is a model space with a single dimensional output. The Rademacher average (RA) of $\mathcal{F}$ is defined as follows:
		\begin{small}
			\begin{equation}
			R_m(\mathcal{F})=\mathbf{E}_{\mathbf{x},\mathbf{\sigma}}\Big[\sup_{f\in\mathcal{F}}\Big|\frac{2}{m}\sum_{i=1}^{m}\sigma_i f(x_i)\Big|\Big],
			\end{equation}
		\end{small} 
		where $\mathbf{x}=\{x_1,\cdots,x_m\}\sim P_x^m$, and $\{\sigma_1,\cdots,\sigma_m\}$ are i.i.d. sampled with $P(\sigma_i=1)=1/2, P(\sigma_i=-1)=1/2$.
	\end{definition}
	\begin{theorem}\cite{koltchinskii2002empirical}\label{thm:generalization}
		Suppose $f\in\mathcal{F}:\mathcal{X}\times\mathcal{Y}\to\mathbb{R}$ is a multi-class prediction model. For $\forall \delta>0$, with probability at least $1-\delta$, we have, $\forall f\in\mathcal{F}$,
		\begin{small}
			\begin{align}
			\label{eqn:generalization_nn}\nonumber
			err_P(f)\le & \inf_{\gamma>0}\Big\{err_S^\gamma(f) + \frac{8K(2K-1)}{\gamma} R_m(\tilde{\mathcal{F}}) \\
			& +\sqrt{\frac{\log\log_2(2\gamma^{-1})}{m}} +\sqrt{\frac{\log(2\delta^{-1})}{2m}}\Big\}.
			\end{align}
		\end{small} 
		where $\tilde{\mathcal{F}}=\{x\to f(\cdot,k);k\in\mathcal{Y},f\in\mathcal{F}\}$
	\end{theorem}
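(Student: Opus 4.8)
The plan is to follow the standard Rademacher-complexity route for margin bounds, specialized to the multi-class setting. First, for a fixed $\gamma>0$, I would introduce a Lipschitz ramp surrogate $\psi_\gamma:\mathbb{R}\to[0,1]$ that sandwiches the margin indicators: $\psi_\gamma(t)=1$ for $t\le 0$, $\psi_\gamma(t)=0$ for $t\ge\gamma$, and linear in between, so that $\psi_\gamma$ is $\frac{1}{\gamma}$-Lipschitz and satisfies $\mathbb{I}_{[t<0]}\le\psi_\gamma(t)\le\mathbb{I}_{[t\le\gamma]}$. Applying this pointwise to the margin $\rho(f;x,y)$ gives $err_P(f)\le\mathbf{E}_{P}[\psi_\gamma(\rho(f;x,y))]$ together with $\frac{1}{m}\sum_{i=1}^m\psi_\gamma(\rho(f;x_i,y_i))\le err_S^\gamma(f)$, which converts the discontinuous $0$-$1$ quantity into a bounded Lipschitz one amenable to concentration.

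Second, I would apply the classical uniform-deviation bound for bounded function classes. By the bounded-differences (McDiarmid) inequality followed by symmetrization, with probability at least $1-\delta$ the gap between the expectation and the empirical average of $\psi_\gamma\circ\rho$, taken uniformly over $f\in\mathcal{F}$, is controlled by $2R_m(\psi_\gamma\circ\mathcal{G})+\sqrt{\frac{\log(2\delta^{-1})}{2m}}$, where $\mathcal{G}=\{(x,y)\mapsto\rho(f;x,y):f\in\mathcal{F}\}$. This step produces the $\sqrt{\log(2\delta^{-1})/(2m)}$ term essentially verbatim.

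Third comes the part I expect to be the main obstacle: passing from $R_m(\psi_\gamma\circ\mathcal{G})$ to the single-output quantity $R_m(\tilde{\mathcal{F}})$ and extracting the multi-class constant $8K(2K-1)$. The Lipschitz contraction (Talagrand) lemma handles the $\frac{1}{\gamma}$ factor cleanly, but the margin $\rho(f;x,y)=f(x,y)-\max_{k\neq y}f(x,k)$ contains both a sum over the true label $y$ and a maximum over the $K-1$ competing components, and the Rademacher average of such a maximum must be split into its per-coordinate pieces. I would bound the max by a sum (or peel over the ordered label pairs), so that each component $f(\cdot,k)$ contributes a copy of $R_m(\tilde{\mathcal{F}})$; carefully tracking the number of such copies together with the symmetrization and contraction constants is exactly what yields the combinatorial factor $K(2K-1)$ and the leading constant $8$.

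Finally, to upgrade the fixed-$\gamma$ statement to the infimum over all $\gamma>0$, I would discretize the scale, e.g.\ $\gamma\in\{2^{-j}:j\ge 1\}$, apply the fixed-$\gamma$ bound at confidence level $\delta_j\propto\delta/j^2$, and take a union bound. Summing the $\delta_j$ back to $\delta$ and approximating an arbitrary $\gamma$ by its nearest grid point gives the characteristic $\sqrt{\log\log_2(2\gamma^{-1})/m}$ penalty, which is the signature of such a geometric union bound, thereby completing the stated inequality.
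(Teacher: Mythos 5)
The paper does not actually prove this theorem: it is imported, with citation, from Koltchinskii and Panchenko (2002), so there is no in-paper proof to compare your attempt against. That said, your outline reproduces the standard argument from that source: the $\frac{1}{\gamma}$-Lipschitz ramp surrogate sandwiched between $\mathbb{I}_{[t<0]}$ and $\mathbb{I}_{[t\le\gamma]}$, McDiarmid plus symmetrization giving the uniform deviation term and the $\sqrt{\log(2\delta^{-1})/(2m)}$ tail, Talagrand contraction to extract $1/\gamma$, a decomposition of the multi-class margin to reach $R_m(\tilde{\mathcal{F}})$, and a geometric grid with a union bound over $\gamma$ producing the $\sqrt{\log\log_2(2\gamma^{-1})/m}$ penalty. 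These are exactly the moves that yield a bound of this shape, so your plan is the same approach as the cited proof.

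The one place where the sketch stops short of a proof is the step you yourself flag as the obstacle: converting $R_m(\{(x,y)\mapsto\rho(f;x,y)\})$ into a multiple of $R_m(\tilde{\mathcal{F}})$, which is where the constant $8K(2K-1)$ lives. Two points need care there. First, ``bound the max by a sum'' must be applied at the level of Rademacher averages, not pointwise: the pointwise inequality $\max_k f_k\le\sum_k f_k$ requires non-negativity, and in any case the max enters $\rho$ with a negative sign, so the usable tool is the lemma that the Rademacher average of a pointwise maximum of classes is at most the sum of the individual averages, proved by iterating $\max(a,b)=\frac{1}{2}\left(a+b+|a-b|\right)$ and contracting the $1$-Lipschitz absolute value. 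Second, the label dependence must be handled explicitly: the margin is a function on $\mathcal{X}\times\mathcal{Y}$ while $\tilde{\mathcal{F}}$ lives on $\mathcal{X}$, so one writes $\rho(f;x,y)=\sum_{y'}\mathbb{I}_{[y=y']}\left(f(x,y')-\max_{k\neq y'}f(x,k)\right)$ and accounts for the indicator factors; this bookkeeping, combined with the max lemma, is precisely what produces $8K(2K-1)$ rather than some other polynomial in $K$. Your parenthetical ``peel over the ordered label pairs'' names the right idea but does not carry it out, so I would characterize your proposal as a faithful reconstruction of the cited proof with its combinatorial crux left unverified, rather than as a wrong approach.
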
 
	According to the margin bound given in Theorem \ref{thm:generalization}, the expected $0$-$1$ error of a DNN model can be upper bounded by the sum of two terms, RA and the empirical margin error. In the next two subsections, we will make discussions on the role of depth in these two terms, respectively. 
	
	\subsection{Rademacher Average}\label{subsec:margin_theory}
	In this subsection, we study the role of depth in the RA-based capacity term. 
	
	In the following theorem, we derive an uniform upper bound of RA for both the fully-connected and convolutional neural networks.\footnote{To the best of our knowledge, an upper bound of RA for fully connected neural networks has been derived before~\cite{bartlett2003rademacher,neyshabur2015norm}, but there is no result available for the convolutional neural networks.}
	
	\begin{theorem}\label{thm:generalization_nn}
		Suppose input space $\mathcal{X}=[-M,M]^d$. In the deep neural networks, if activation function $\phi$ is $L_\phi$- Lipschitz and non-negative, pooling function $\varphi$ is max-pooling or average-pooling, and the size of pooling region in each layer is bounded, i.e., $p_l\leq p$, then we have,
		\begin{small}
			\begin{equation} \label{eqn:rademacher_bound}
			R_m(\mathcal{F}_A^L) \le cM \sqrt{\frac{\ln d}{m}}(pL_\phi A)^{L}.
			\end{equation}
		\end{small} 
		where $c$ is a constant.
	\end{theorem}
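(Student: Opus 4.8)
The plan is to prove the bound by induction on the layer index $l$, exploiting the recursive definition of the spaces $\mathcal{F}_A^l$ and $\bar{\mathcal{F}}_A^l$. The inductive hypothesis I would carry is $R_m(\mathcal{F}_A^l) \le cM\sqrt{\frac{\ln d}{m}}(pL_\phi A)^l$. For the base case $l=0$, the class $\mathcal{F}_A^0$ consists of the $d$ coordinate projections $x\mapsto x_{|i}$, each bounded in absolute value by $M$ because $\mathcal{X}=[-M,M]^d$. Since this is a finite function class of cardinality $d$, Massart's finite-class lemma (a union-bound estimate on the Rademacher average of a finite bounded set) gives $R_m(\mathcal{F}_A^0)\le cM\sqrt{\frac{\ln d}{m}}$, matching the claimed bound at $l=0$.

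For the inductive step I would pass from layer $l-1$ to layer $l$ by decomposing the layer map into its three constituent operations and tracking how each one inflates the Rademacher average. First, the weighted sum in Eqn~(\ref{eqn:weight_sum}) forms the scaled absolute convex hull of $\mathcal{F}_A^{l-1}$: the supremum of $\sum_i w_i f_i$ over the $\ell_1$-ball $\sum_i|w_i|\le A$ is attained at an extreme point $\pm A f_i$, and taking (symmetric) convex hulls leaves $R_m$ unchanged, so $R_m(\bar{\mathcal{F}}_A^l)\le A\,R_m(\mathcal{F}_A^{l-1})$. Second, composition with the $L_\phi$-Lipschitz activation $\phi$ contributes a factor $L_\phi$ via the Ledoux--Talagrand contraction principle. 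Third, the pooling $\varphi$ over a region of size $p_l\le p$ contributes a factor at most $p$: for average pooling this follows from subadditivity of $R_m$ over the sum of $p_l$ copies of $\phi\circ\bar{\mathcal{F}}_A^l$ (which in fact yields only a factor $1$), and for max pooling I would use the non-negativity of $\phi$ to bound $\max_j\phi(f_j)\le\sum_j\phi(f_j)$ and reduce again to a sum of $p_l$ copies. Chaining the three factors gives the one-layer recursion $R_m(\mathcal{F}_A^l)\le pL_\phi A\,R_m(\mathcal{F}_A^{l-1})$; the output layer (a plain weighted sum) contributes a final factor $A$, and collecting the $L-1$ hidden-layer factors together with this and the base term yields the stated $(pL_\phi A)^L$ up to the absorbed universal constant $c$.

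The main obstacle I anticipate is making the pooling and activation steps rigorous in the presence of the absolute value $|\cdot|$ in the definition of $R_m$, since the convenient contraction and domination arguments are cleanest for the signed (no-absolute-value) complexity. For max pooling in particular, the pointwise inequality $\max_j\phi(f_j)\le\sum_j\phi(f_j)$ does \emph{not} by itself imply an inequality between Rademacher averages, because $R_m$ is not monotone under pointwise domination once the supremum over $f$ and the signed sum over $\sigma_i$ are both present; the non-negativity hypothesis on $\phi$ is precisely what I would exploit to convert the max into a sum of $p_l$ independent copies and then invoke subadditivity. A secondary technical point is that the contraction principle is usually stated for Lipschitz maps fixing the origin, whereas activations such as the sigmoid have $\phi(0)\ne 0$; I would absorb the resulting constant shift into the constant $c$, noting that an additive constant contributes only a term of order $\mathbf{E}_\sigma\big|\sum_i\sigma_i\big|=O(\sqrt{m})$ before the $2/m$ normalization, which is of the same order as the base term and does not affect the multiplicative $(pL_\phi A)^L$ dependence.
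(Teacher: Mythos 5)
Your proposal is correct and takes essentially the same route as the paper's own proof: a layer-by-layer peeling argument in which the $\ell_1$-constrained weighted sum contributes a factor $A$ (extreme points of the $\ell_1$ ball), the Lipschitz activation contributes $L_\phi$ via contraction, pooling contributes at most $p$ by bounding the (non-negative) max by the sum and invoking subadditivity, and the bottom layer supplies the $cM\sqrt{\ln d/m}$ term. The only differences are cosmetic — you run the recursion bottom-up as an induction and prove the base case directly via Massart's lemma on the $d$ coordinate projections, whereas the paper iterates top-down and cites the Bartlett--Mendelson bound for the first linear layer (itself proved exactly this way) — and you explicitly flag the two technical subtleties (non-monotonicity of $R_m$ under pointwise domination for max-pooling, and $\phi(0)\neq 0$ in the contraction step) that the paper's proof silently glosses over.
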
 
	\begin{proof} 
		According to the definition of $\mathcal{F}_A^L$ and RA, we have,
		\begin{small}
			\begin{align*}
			R_m(\mathcal{F}_A^L)
			= \mathbf{E}_{\mathbf{x},\mathbf{\sigma}}\Big[\sup_{\|\mathbf{w}\|_1\le A,f_j\in\mathcal{F}_A^{L-1}}\Big|\frac{2}{m}\sum_{i=1}^{m}\sigma_i\sum_{j=1}^{n_{L-1}}w_jf_j(x_i)\Big|\Big] \\
			= \mathbf{E}_{\mathbf{x},\mathbf{\sigma}}\Big[\sup_{\|\mathbf{w}\|_1\le A,f_j\in\mathcal{F}_A^{L-1}}\Big|\frac{2}{m}\sum_{j=1}^{n_{L-1}}w_j\sum_{i=1}^{m}\sigma_if_j(x_i)\Big|\Big].
			\end{align*}
		\end{small} 
		Supposing {$\mathbf{w}=\{w_1,\cdots,w_{n_{L-1}}\}$} and {$\mathbf{h}=\{\sum_{i=1}^{m}\sigma_i f_1(x_i),\cdots,\sum_{i=1}^{m}\sigma_if_{n_{L-1}}(x_i)\}$}, the inner product {$\langle\mathbf{w},\mathbf{h}\rangle$} is maximized when $\mathbf{w}$ is at one of the extreme points of the $l_1$ ball, which implies:
		\begin{small}
			\begin{align}\nonumber
			R_m(\mathcal{F}_A^L) 
			&\le A \mathbf{E}_{\mathbf{x},\mathbf{\sigma}}\Big[\sup_{f\in\mathcal{F}_A^{L-1}}\Big|\frac{2}{m}\sum_{i=1}^{m}\sigma_i f(x_i)\Big|\Big] \\
			&=A R_m(\mathcal{F}_A^{L-1}).\label{eqn:max}
			\end{align}
		\end{small} 
		For function class $\mathcal{F}_A^{L-1}$, if the ($L-1$)-th layer is a fully connected layer, it is clear that $R_m(\mathcal{F}_A^{L-1})\le R_m(\phi\circ \bar{\mathcal{F}}_A^{L-1})$ holds. If the ($L-1$)-th layer is a convolutional layer with max-pooling or average-pooling, we have,
		\begin{small}
			\begin{align}\nonumber
			& R_m(\mathcal{F}_A^{L-1})\\ \nonumber
			&\le\mathbf{E}_{\mathbf{x},\mathbf{\sigma}}\Big[\sup_{f_1,\cdots,f_{p_{L-1}}\in\bar{\mathcal{F}}_A^{L-1}}\Big|\frac{2}{m}\sum_{i=1}^m\sigma_i \sum_{j=1}^{p_{L-1}}\phi(f_j(x_i))\Big|\Big] \label{eqn:pooling_less}  \\ 
			& =  p_{L-1}R_m(\phi\circ\bar{\mathcal{F}}_A^{L-1}).
			\end{align}
		\end{small} 
		The inequality (\ref{eqn:pooling_less}) holds due to the fact that most widely used activation functions $\phi$ (e.g., standard sigmoid and rectifier) have non-negative outputs.
		
		Therefore, for both fully connected layers and convolutional layers, $R_m(\mathcal{F}_A^{L-1})\le p_{L-1} R_m(\phi\circ \bar{\mathcal{F}}_A^{L-1})$ uniformly holds. Further considering the Lipschitz property of $\phi$, we have,
		\begin{small}
			\begin{equation}\label{eqn:lip}
			R_m(\mathcal{F}_A^{L-1}) \le 2p_{L-1}L_\phi R_m(\bar{\mathcal{F}}_A^{L-1}).
			\end{equation}
		\end{small} 
		Iteratively using maximization principle of inner product in (\ref{eqn:max}), property of RA in (\ref{eqn:pooling_less}) and Lipschitz property in (\ref{eqn:lip}), considering $p_l\leq p$, we can obtain the following inequality,
		\begin{small}
			\begin{equation}\label{eqn:repeat}
			R_m(\mathcal{F}_A^{L}) \le (2pL_\phi A)^{L-1}R_m(\bar{\mathcal{F}}_A^1).
			\end{equation}
		\end{small} 
		According to~\cite{bartlett2003rademacher}, $R_m(\bar{\mathcal{F}}_A^1)$ can be bounded by:
		\begin{small}
			\begin{equation}\label{eqn:g_bound}
			R_m(\bar{\mathcal{F}}_A^1) \le cAM\sqrt{\frac{\ln d}{m}},
			\end{equation}
		\end{small}where $c$ is a constant.
		
		Combining (\ref{eqn:repeat}) and (\ref{eqn:g_bound}), we can obtain the upper bound on the RA of DNN.
	\end{proof} 
	From the above theorem, we can see that with the increasing depth, the upper bound of RA will increase, and thus the margin bound will become looser. This indicates that depth has its negative impact on the test performance of DNN.
	
	\subsection{Empirical Margin Error}
	\label{subsec:margin_exp}
	In this subsection, we study the role of depth in empirical margin error. 
	
	To this end, we first discuss representation power of DNN models. In particular, we use the Betti numbers based complexity~\cite{bianchini2014complexity} to measure the representation power. We generalize the definition of Betti numbers based complexity into multi-class setting as follows.
	\begin{definition}
		The Betti numbers based complexity of functions implemented by multi-class neural networks $\mathcal{F}_A^L$ is defined as $N(\mathcal{F}_A^L)=\sum_{i=1}^{K-1}B(S_i)$, where $B(S_i)$ is the sum of Betti numbers\footnote{For any subset $S\subset \mathbb{R}^d$, there exist $d$ Betti numbers, denoted as $b_j(S), 0\le j \le d-1$. Therefore, the sum of Betti numbers is denoted as $B(S)=\sum_{j=0}^{d-1} b_j(S)$. Intuitively, the first Betti number $b_0(S)$ is the number of connected components of the set $S$, while the $j$-th Betti number $b_j(S)$ counts the number of $(j+1)$-dimension holes in $S$~\cite{bianchini2014complexity}.} that measures the complexity of the set $S_i$. Here $S_i=\cap_{j=1,j\neq i}^{K}\{x\in\mathbb{R}^d\mid f(x,i)-f(x,j)\ge 0;f(x,\cdot)\in\mathcal{F}_A^L\}, i=1,\dots,K-1$.
	\end{definition}
	As can be seen from the above definition, the Betti numbers based complexity considers classification output and merge those regions corresponding to the same classification output (thus is more accurate than the linear region number complexity~\cite{montufar2014number} in measuring the representation power). As far as we know, only for binary classification and fully connected networks, the bounds of the Betti numbers based complexity was derived~\cite{bianchini2014complexity}, and there is no result for the setting of multi-class classification and convolutional networks. In the following, we give our own theorem to fill in this gap.
	\begin{theorem}\label{thm:multi_class_complexity}
		For neural networks $\mathcal{F}_A^L$ that has $h$ hidden units. If activation function $\phi$ is a Pfaffian function with complexity $(\alpha,\beta,\eta)$, pooling function $\varphi$ is average-pooling and $d\le h\eta$, then
		\begin{small}
			\begin{align}\nonumber
			& N(\mathcal{F}_A^L) \le (K-1)^{d+1}2^{h\eta(h\eta-1)/2} \\
			&\times O\left(\left(d\left(\left(\alpha+\beta-1+\alpha\beta\right)(L-1)+\beta\left(\alpha+1\right)\right)\right)^{d+h\eta}\right)
			\end{align}
		\end{small}
	\end{theorem}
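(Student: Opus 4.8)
The plan is to realize every output difference of the network as a \emph{Pfaffian function} built on one common Pfaffian chain, and then invoke the known bound on the sum of Betti numbers of a set cut out by Pfaffian inequalities. Recall that $\phi$ is Pfaffian of complexity $(\alpha,\beta,\eta)$: it is a degree-$\beta$ polynomial in its argument and in a chain of $\eta$ auxiliary functions whose derivatives are degree-$\alpha$ polynomials in the chain. First I would build a single Pfaffian chain for the whole network by concatenating, in layer order, the $\eta$ chain functions attached to each of the $h$ hidden units. Since the units can be listed so that each depends only on quantities produced by earlier layers, the concatenated chain is genuinely triangular and hence a valid Pfaffian chain of length $h\eta$. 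This is exactly where the hypothesis that $\varphi$ is \emph{average-pooling} enters: average-pooling is linear, so it is absorbed into the weighted-sum step $\bar{\mathcal{F}}_A^l$ without adding chain functions or non-smoothness, whereas max-pooling would destroy the Pfaffian (smooth) structure.

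Second, I would track how the two degree parameters evolve from layer $l-1$ to layer $l$. The pre-activation feeding a unit is a linear combination of the previous layer's outputs, so substituting it into $\phi$ and into the chain-defining polynomials raises the degrees in a controlled, \emph{additive} (not multiplicative) way, each extra layer contributing a fixed increment governed by $\alpha$ and $\beta$. Iterating this recursion across the $L-1$ hidden layers (the final read-out being linear adds no chain functions) is what produces the effective degree
\[
D \;=\; (\alpha+\beta-1+\alpha\beta)(L-1)+\beta(\alpha+1),
\]
so that each difference $f(x,i)-f(x,j)$ is Pfaffian of degree $O(D)$ in the $d$ input variables relative to the common chain of length $h\eta$. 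Showing that the per-layer increments combine to exactly this closed form — and that the linear combinations together with linear pooling really keep the growth additive rather than exponential — is the step I expect to be the main obstacle, since it demands careful bookkeeping of the Pfaffian composition estimates (those of Khovanskii and of Gabrielov--Vorobjov, specialized to this layered architecture).

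Third, I would fix a class $i$ and note that the region $S_i=\bigcap_{j\neq i}\{x: f(x,i)-f(x,j)\ge 0\}$ is defined by $K-1$ Pfaffian inequalities, all sharing the chain of length $\eta'=h\eta$ and having degree $O(D)$. Applying the standard upper bound for the sum of Betti numbers of a semi-Pfaffian set, of the form $s^{d}\,2^{\eta'(\eta'-1)/2}\,O\!\big((d\cdot\mathrm{deg})^{d+\eta'}\big)$ with $s$ the number of inequalities, yields
\[
B(S_i)\;\le\;(K-1)^{d}\,2^{h\eta(h\eta-1)/2}\,O\!\Big((dD)^{\,d+h\eta}\Big).
\]
Here the hypothesis $d\le h\eta$ is precisely what collapses the usual $\min(d,h\eta)$ occurring in the exponent and prefactor, giving the clean exponent $d+h\eta$ and the degree-budget $d(\alpha+\beta)\sim dD$. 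Finally, summing $B(S_i)$ over the classes $i=1,\dots,K-1$ multiplies by one further factor of $K-1$, producing $N(\mathcal{F}_A^L)\le (K-1)^{d+1}\,2^{h\eta(h\eta-1)/2}\,O\!\big((dD)^{d+h\eta}\big)$, exactly the claimed bound. The only genuinely new work beyond \cite{bianchini2014complexity} is the multi-class intersection step (the $K-1$ inequalities per region plus the outer sum over classes) together with the degree tracking through the average-pooling/convolutional layers sketched above.
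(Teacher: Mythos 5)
Your proposal follows essentially the same route as the paper's proof: build one common Pfaffian chain of length $h\eta$ from the activation's chain functions over all hidden units, track the degree parameters additively through the $L-1$ layers (using linearity of average-pooling) to get the stated effective degree, realize each $S_i$ as a set cut out by $K-1$ Pfaffian sign conditions, apply Zell's Betti-number bound for semi-Pfaffian sets, and sum over the $K-1$ classes to pick up the final factor. The degree bookkeeping you flag as the main obstacle is handled in the paper by reducing to Lemma 2 of \cite{bianchini2014complexity} after observing that the highest-degree terms of $\partial f^l/\partial x_{|i}$ have the form $\prod_{i=1}^{l-1} d\phi(f^i)/df^i$, but otherwise your outline matches the paper's argument step for step.
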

	\begin{proof}
		We first show that the functions $f(x,\cdot)\in\mathcal{F}_A^L$ are Pfaffian functions with complexity $((\alpha+\beta-1+\alpha\beta)(L-1)+\alpha\beta,\beta,h\eta)$, where $\mathcal{F}_A^L$ can contain both fully-connected layers and convolutional layers. Assume the Pfaffian chain which defines activation function $\phi(t)$ is $(\phi_1(t),\dots,\phi_\eta(t))$, and then $s^l$ is constructed by applying all $\phi_i,1\le i\le\eta$ on all the neurons up to layer $l-1$, i.e., $f^l\in\bar{\mathcal{F}}_A^l,l\in\{1,\dots,L-1\}$. As the first step, we need to get the degree of $f^l$ in the chain $s^l$. Since $f^l=\frac{1}{p_{l-1}}\sum_{k=1}^{n_{l-1}}w_k(\phi (f_{k,1}^{l-1})+\cdots+\phi(f_{k,p_{l-1}}^{l-1}))$ and $\phi$ is a Pfaffian function, $f^l$ is a polynomial of degree $\beta$ in the chain $s^l$. Then, it remains to show that the derivative of each function in $s^l$, i.e., $\frac{\partial \phi_j(f^l)}{\partial x_{|i}}=\frac{d\phi_j(f^l)}{d f^l}\frac{\partial f^l}{\partial x_{|i}}$, can be defined as a polynomial in the functions of the chain and the input. For average pooling, by iteratively using chain rule, we can obtain that the highest degree terms of $\frac{\partial f^l}{\partial x_{|i}}$ are in the form of $\prod_{i=1}^{l-1}\frac{d\phi(f^i)}{df^i}$. Following the lemma 2 in~\cite{bianchini2014complexity}, we obtain the complexity of $f(x,\cdot)\in\mathcal{F}_A^L$.
		
		Furthermore, the sum of two Pfaffian functions $f_1$ and $f_2$ defined by the same Pffaffian chain of length $\eta$ with complexity $(\alpha_1,\beta_1,\eta)$ and $(\alpha_2,\beta_2,\eta)$ respectively is a Pfaffian function with complexity $\left(\max(\alpha_1,\alpha_2),\max(\beta_1,\beta_2),\eta\right)$~\cite{gabrielov2004complexity}. Therefore, $f(x,i)-f(x,j),i\neq j$ is a Pfaffian function with complexity $((\alpha+\beta-1+\alpha\beta)(L-1)+\alpha\beta,\beta,h\eta)$.
		
		According to~\cite{zell1999betti}, since $S_i$ is defined by $K-1$ sign conditions (inequalities or equalities) on Pfaffian functions, and all the functions defining $S_i$ have complexity at most $((\alpha+\beta+\alpha\beta)(L-1)+\alpha\beta,\beta,h\eta)$, $B(S_i)$ can be upper bounded by $(K-1)^{d}2^{h\eta(h\eta-1)/2}\times O(\left(d\left(\left(\alpha+\beta-1+\alpha\beta\right)(L-1)+\beta\left(\alpha+1\right)\right)\right)^{d+h\eta})$.
		
		Summing over all $i\in\{1,\dots,K-1\}$, we get the results stated in Theorem~\ref{thm:multi_class_complexity}. 
	\end{proof}
	Theorem~\ref{thm:multi_class_complexity} upper bounds the Betti numbers based complexity for general activation functions. For specific active functions, we can get the following results: when $\phi=\arctan(\cdot)$ and $d\le 2h$, since $\arctan$ is of complexity $(3,1,2)$, we have $N(\mathcal{F}_A^L) \le (K-1)^{d+1}2^{h(2h-1)}O((d(L-1)+d)^{d+2h})$; when $\phi=\tanh(\cdot)$ and $n\le h$, since $\tanh$ is of complexity $(2,1,1)$, we have $N(\mathcal{F}_A^L) \le (K-1)^{d+1}2^{h(h-1)/2}O((d(L-1)+d)^{d+h})$.
	
	Basically, Theorem~\ref{thm:multi_class_complexity} indicates that in the multi-class setting, the Betti numbers based complexity grows with the increasing depth $L$. As a result, deeper nets will have larger representation power than shallower nets, which makes deeper nets fit better to the training data and achieve smaller empirical margin error. This indicates that depth has its positive impact on the test performance of DNN.
	
	Actually, above discussions about impact of depth on representation power are consistent with our empirical findings. We conducted experiments on two datasets, MNIST~\cite{lecun1998gradient} and CIFAR-10~\cite{Krizhevsky09learningmultiple}. To investigate the influence of network depth $L$, we trained fully-connected DNN with different depths and restricted number of hidden units. The experimental results are shown in Figure~\ref{fig:depth_margin} and indicate that no matter on which dataset, deeper networks have smaller empirical margin errors than shallower networks for most of the margin coefficients. 
	\begin{figure}
		\centering
		\subfloat[MNIST]
		{
			\includegraphics[width=0.47\columnwidth]{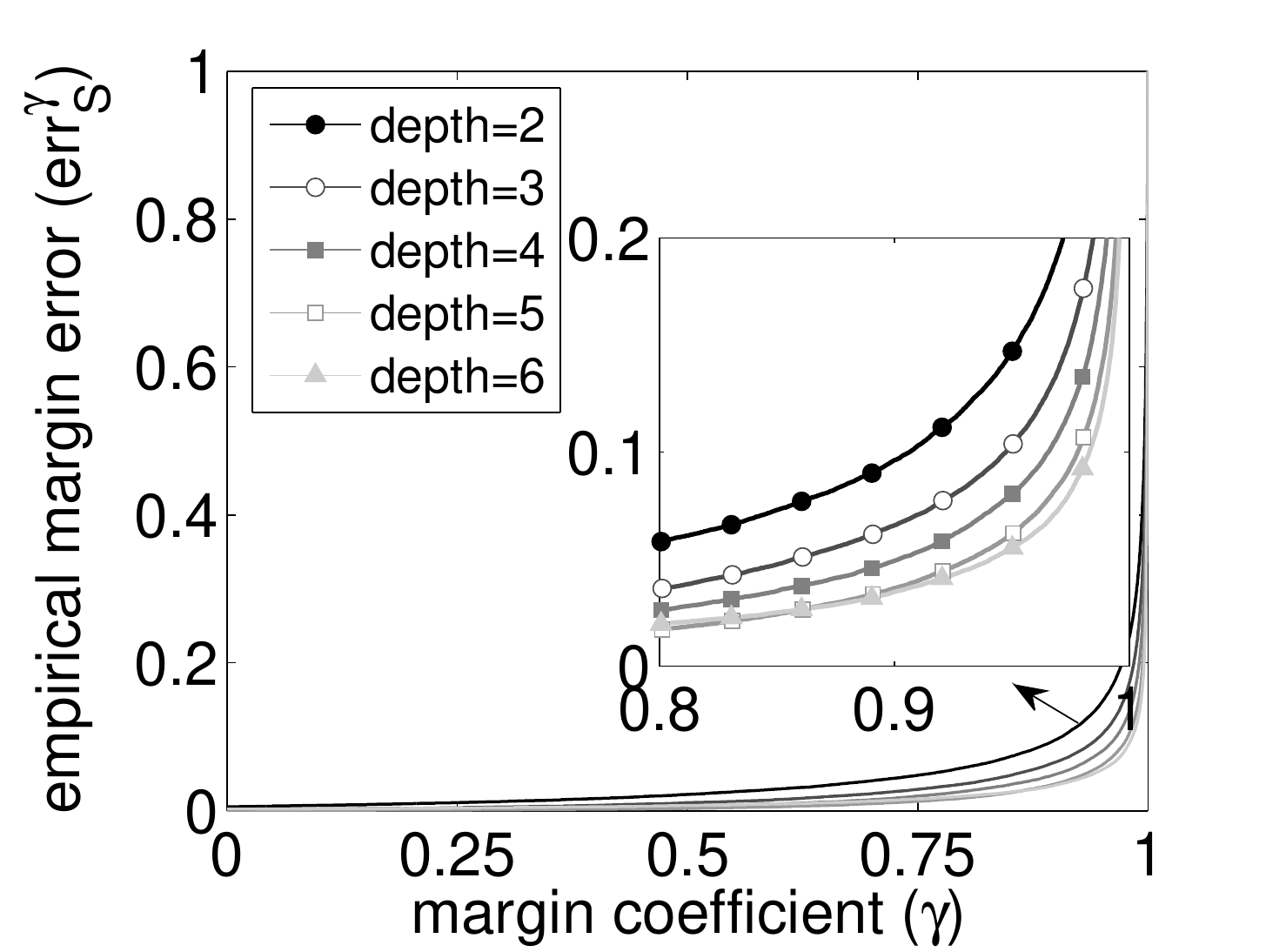}
			\label{subfig:mnist_margin_depth}
		}
		\subfloat[CIFAR-10]
		{
			\includegraphics[width=0.47\columnwidth]{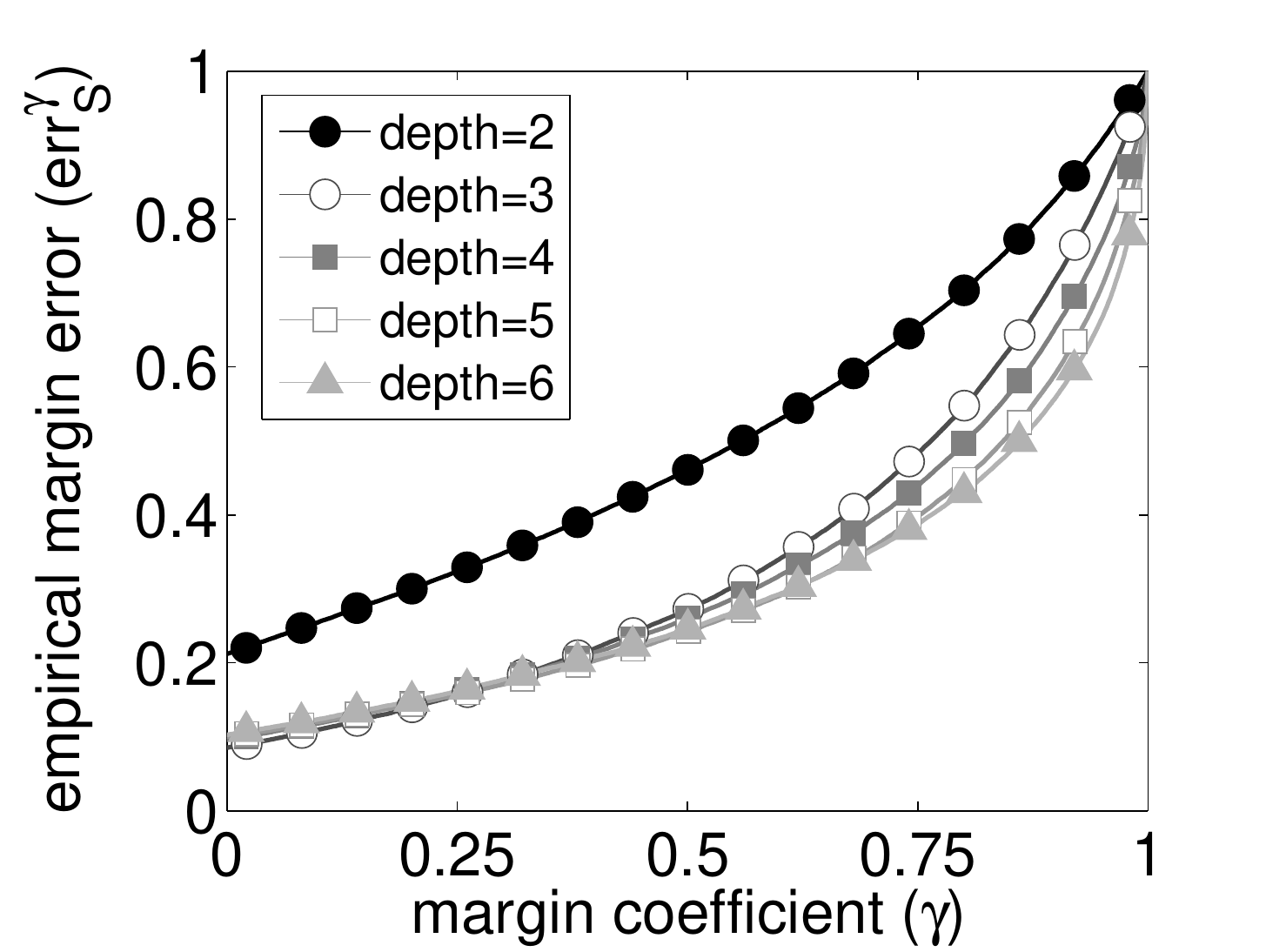}
			\label{subfig:cifar_margin_depth}
		} 
		\caption{The influence of depth on empirical margin error.}
		\label{fig:depth_margin}
	\end{figure}	
	\subsection{Discussions} 
	Based on discussions in previous two subsections, we can see that when the depth $L$ of DNN increases, (1) the RA term in margin bound will increase (according to Theorem~\ref{thm:generalization_nn}); (2) the empirical margin error in margin bound will decrease since deeper nets have larger representation power (according to Theorem~\ref{thm:multi_class_complexity}). As a consequence, we can come to the conclusion that, for DNN with restricted number of hidden units, arbitrarily increasing depth is not always good since there is a clear tradeoff between its positive and negative impacts on test error. In other words, with the increasing depth, the test error of DNN may first decrease, and then increase. 
	
	Actually this theoretical pattern is consistent with our empirical observations on different datasets. We used the same experimental setting as that in the subsection~\ref{subsec:margin_exp} and repeated the training of DNN (with different random initializations) for 5 times. Figure~\ref{fig:depth_test_error} reports the average and minimum test error of $5$ learned models. We can observe that as the depth increases, the test error first decreases (probably because increased representation power overwhelms increased RA capacity); and then increase (probably because RA capacity increases so quickly that representation power cannot compensate for negative impact of increased capacity). 
	\begin{figure}
		\centering
		\subfloat[MNIST]
		{
			\includegraphics[width=0.48\columnwidth]{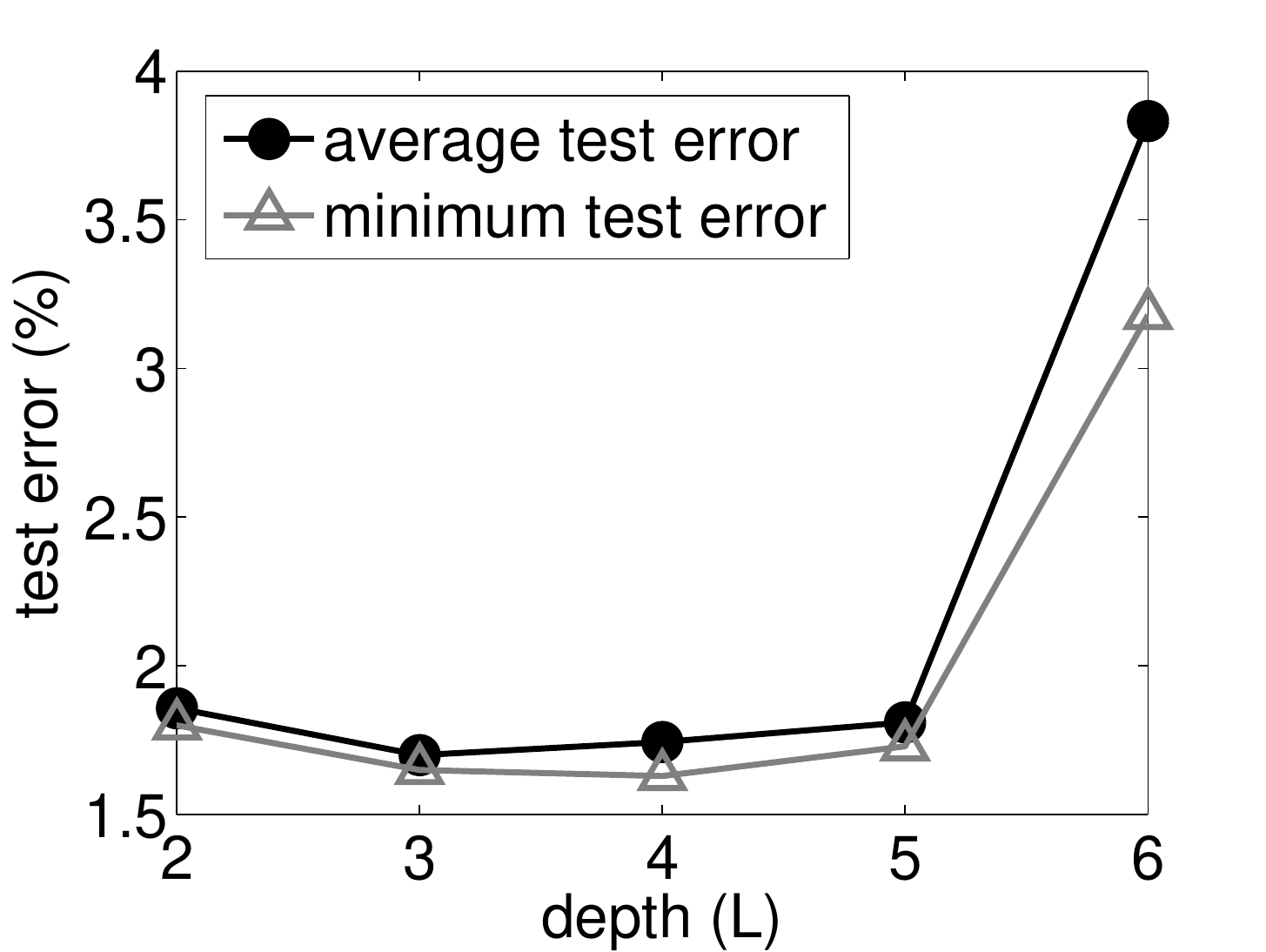}
			\label{subfig:mnist_test_depth}
		}
		\subfloat[CIFAR-10]
		{
			\includegraphics[width=0.48\columnwidth]{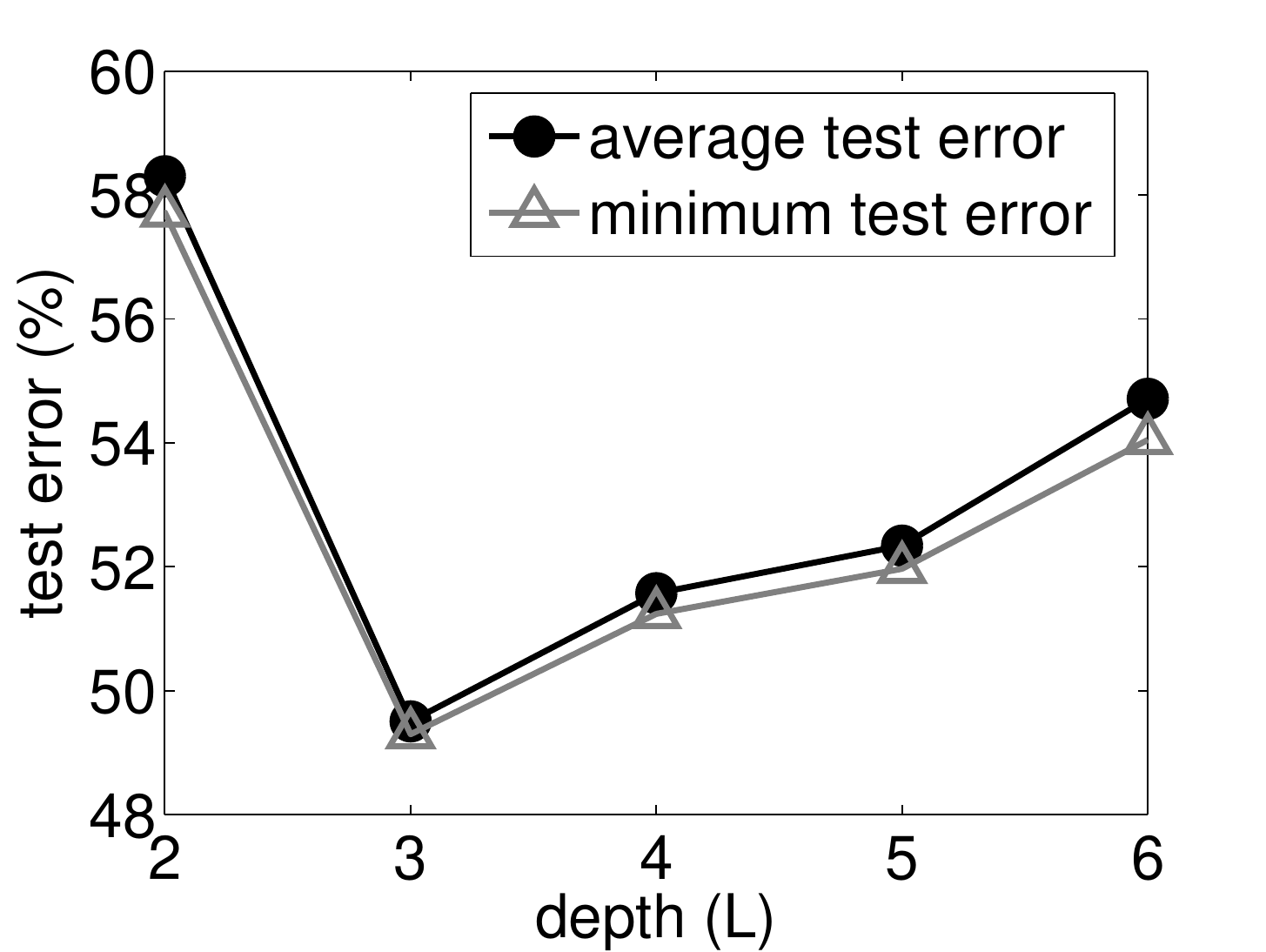}
			\label{subfig:cifar_test_depth}
		}
		\caption{The influence of depth on test error.}
		\label{fig:depth_test_error} 
	\end{figure}
	
	\section{Large Margin Deep Neural Networks}\label{sec:alg} 
	From the discussions in Section~\ref{sec:generalization}, we can see that one may have to pay the cost of larger RA capacity when trying to obtain better representation power by increasing the depth of DNN (not to mention that the effective training of very deep neural networks is highly non-trivial~\cite{glorot2010understanding,srivastava2015training}). Then a nature question is whether we can avoid this tradeoff, and achieve good test performance in an alternative way. 
	
	To this end, let us revisit the positive impact of depth: it actually lies in that deeper neural networks tend to have larger representation power and thus smaller empirical margin error. Then the question is: can we directly minimize empirical margin error? Our answer to this question is yes, and our proposal is to add a margin-based penalty term to current loss function. In this way, we should be able to effectively tighten margin bound without manipulating the depth. 
	
	One may argue that widely used loss functions (e.g., cross entropy loss and hinge loss) in DNN are convex surrogates of margin error by themselves, and it might be unnecessary to introduce an additional margin-based penalty term. However, we would like to point out that unlike hinge loss for SVM or exponential loss for Adaboost, which have theoretical guarantee for convergence to margin maximizing separators as the regularization vanishes~\cite{rosset2003margin}, there is no optimization consistency guarantee for these losses used in DNN since neural networks are highly non-convex. Therefore, it makes sense to explicitly add a margin-based penalty term to loss function, in order to further reduce empirical margin error during training process.
	
	\subsection{Algorithm Description}\label{subsec:alg_description} 	
	We propose adding two kinds of margin-based penalty terms to the original cross entropy loss\footnote{Although we take the most widely-used cross entropy loss as example, these margin-based penalty terms can also be added to other loss functions.}. The first penalty term is the gap between the upper bound of margin (i.e., 1)\footnote{Please note that, after softmax operation, the outputs are normalized to $[0,1]$} and the margin of the sample (i.e., $\rho(f; x, y)$). The second one is the average gap between upper bound of margin and the difference between the predicted output for the true category and those for all the wrong categories. It can be easily verified that the second penalty term is an upper bound of the first penalty term. Mathematically, the penalized loss functions can be described as follows (for ease of reference, we call them $C_{1}$ and $C_{2}$ respectively): for model $f$, sample $x,y$,
	\begin{small}
		\begin{align*}
		C_{1}(f;x,y)= & C(f;x,y) + \lambda \Big(1-\rho(f;x,y)\Big)^2,\\
		C_{2}(f;x,y)= & C(f;x,y) \\
		&+\frac{\lambda}{K-1}\sum_{k\neq y}\Big(1-(f(x,y)-f(x,k))\Big)^2.
		\end{align*}		
	\end{small} 
	We call the algorithms that minimize the above new loss functions \emph{large margin DNN algorithms} (LMDNN). For ease of reference, we denote LMDNN minimizing $C_1$ and $C_2$ as LMDNN-$C_1$ and LMDNN-$C_2$ respectively, and the standard DNN algorithms minimizing $C$ as DNN-$C$. To train LMDNN, we also employ the back propagation method.
	\subsection{Experimental Results}\label{subsec:alg_exp} 
	\begin{table}
		\begin{center}
			\begin{small}
				\begin{tabular}{cccc}
					\hline
					\hline 
					& MNIST & CIFAR-10 \\
					\hline 
					DNN-$C$ (\%) & $0.899\pm0.038$ & $18.339\pm0.336$ \\
					LMDNN-$C_1$ (\%) & $\mathbf{0.734\pm0.046}$ & $\mathbf{17.598\pm0.274}$ \\ 
					LMDNN-$C_2$ (\%) &$0.736\pm0.041$&$17.728\pm0.283$ \\
					\hline
					\hline
				\end{tabular}
			\end{small}
		\end{center}
		\caption{Test error (\%) of DNN-$C$ and LMDNNs.}
		\label{tab:loss_test_error}
	\end{table} 
	\begin{figure}[t]
		\centering
		\subfloat[MNIST]
		{
			\includegraphics[width=0.48\columnwidth]{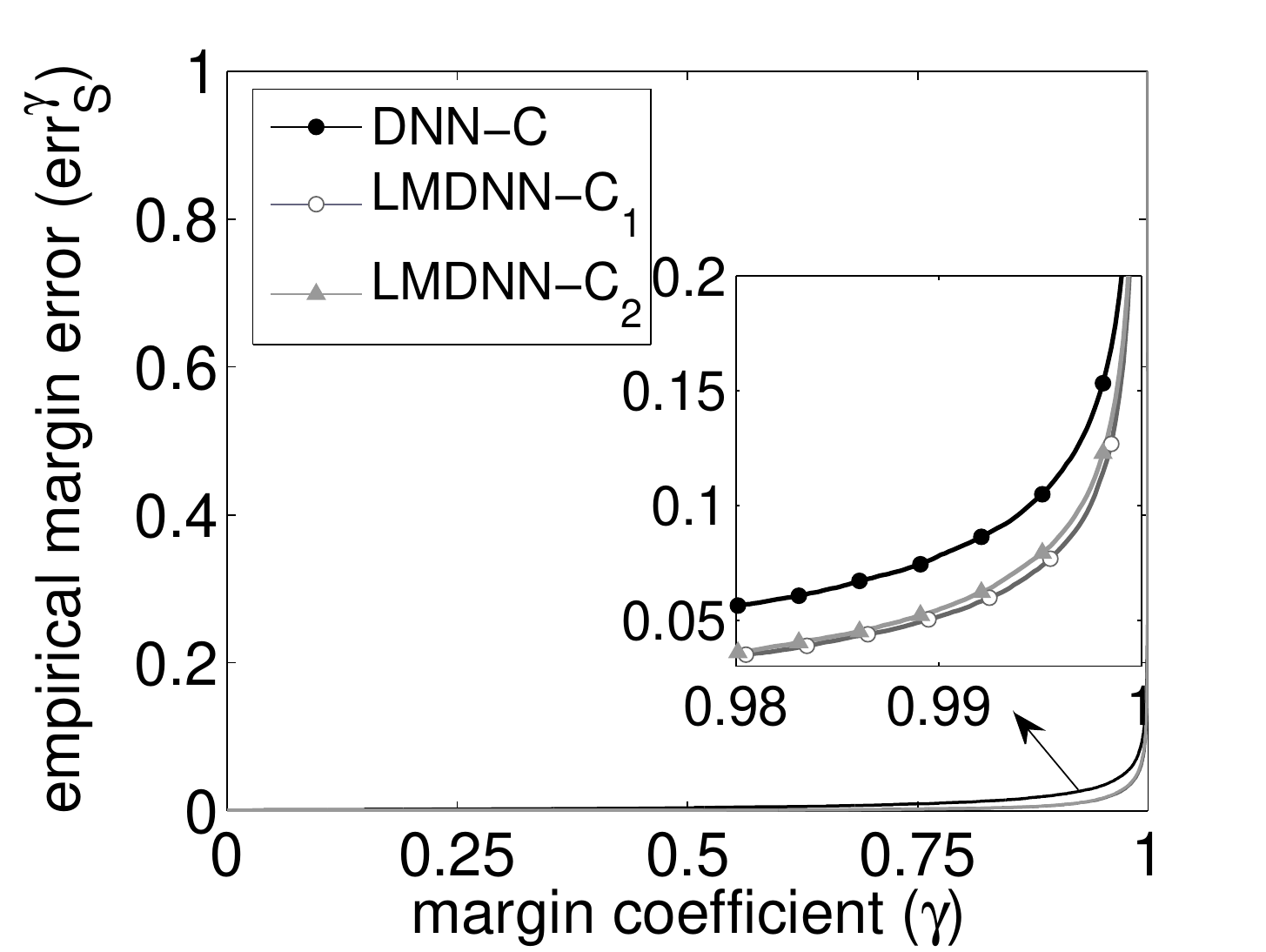}
			\label{subfig:mnist_loss_margin} 
		}
		\subfloat[CIFAR-10]
		{
			\includegraphics[width=0.48\columnwidth]{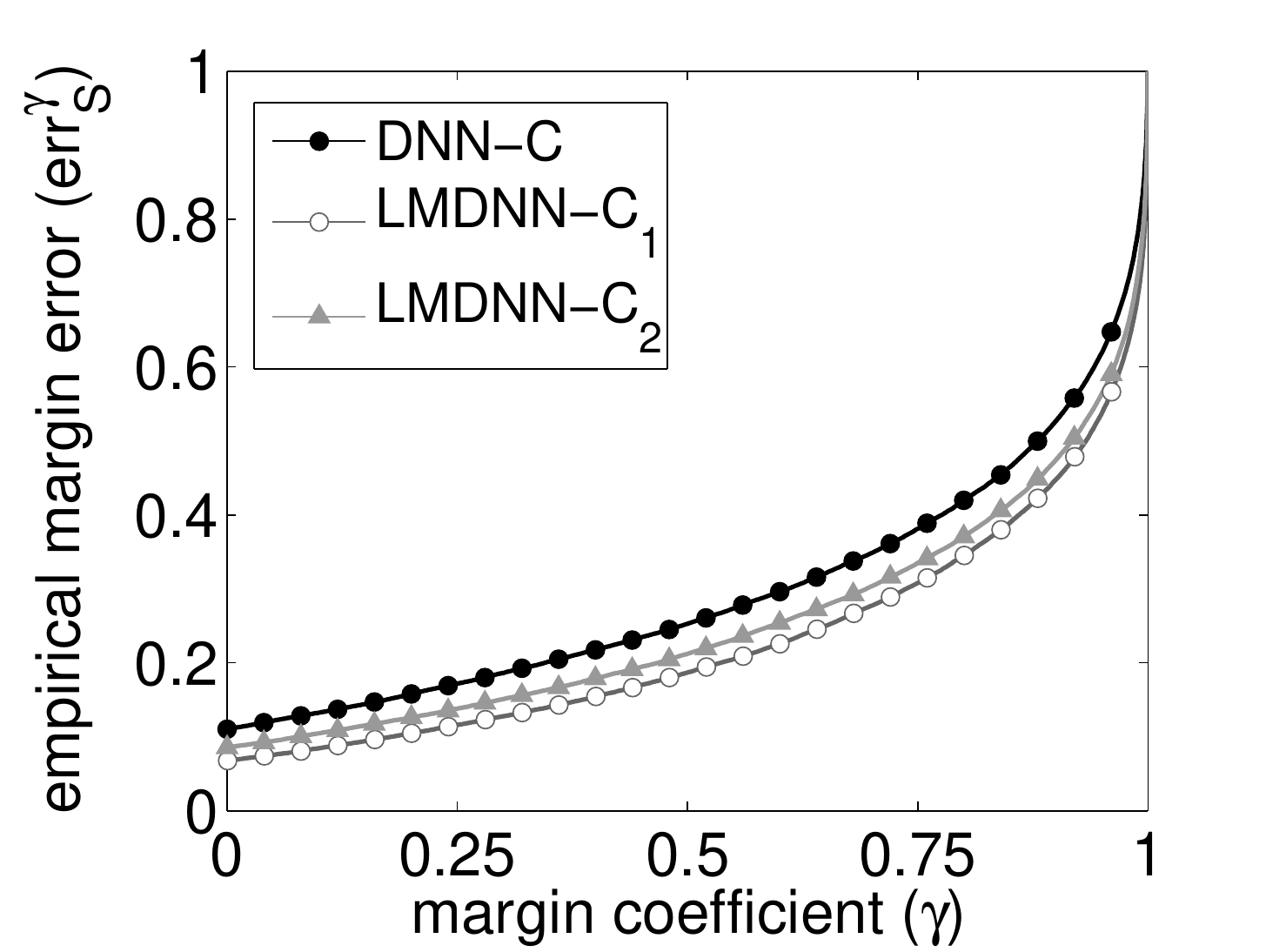}
			\label{subfig:cifar_loss_margin}
		}
		\caption{Empirical margin error of LMDNNs.}
		\label{fig:comp_margin}
	\end{figure}
	\begin{figure}[t]
		\centering
		\subfloat[MNIST]
		{
			\includegraphics[width=0.48\columnwidth]{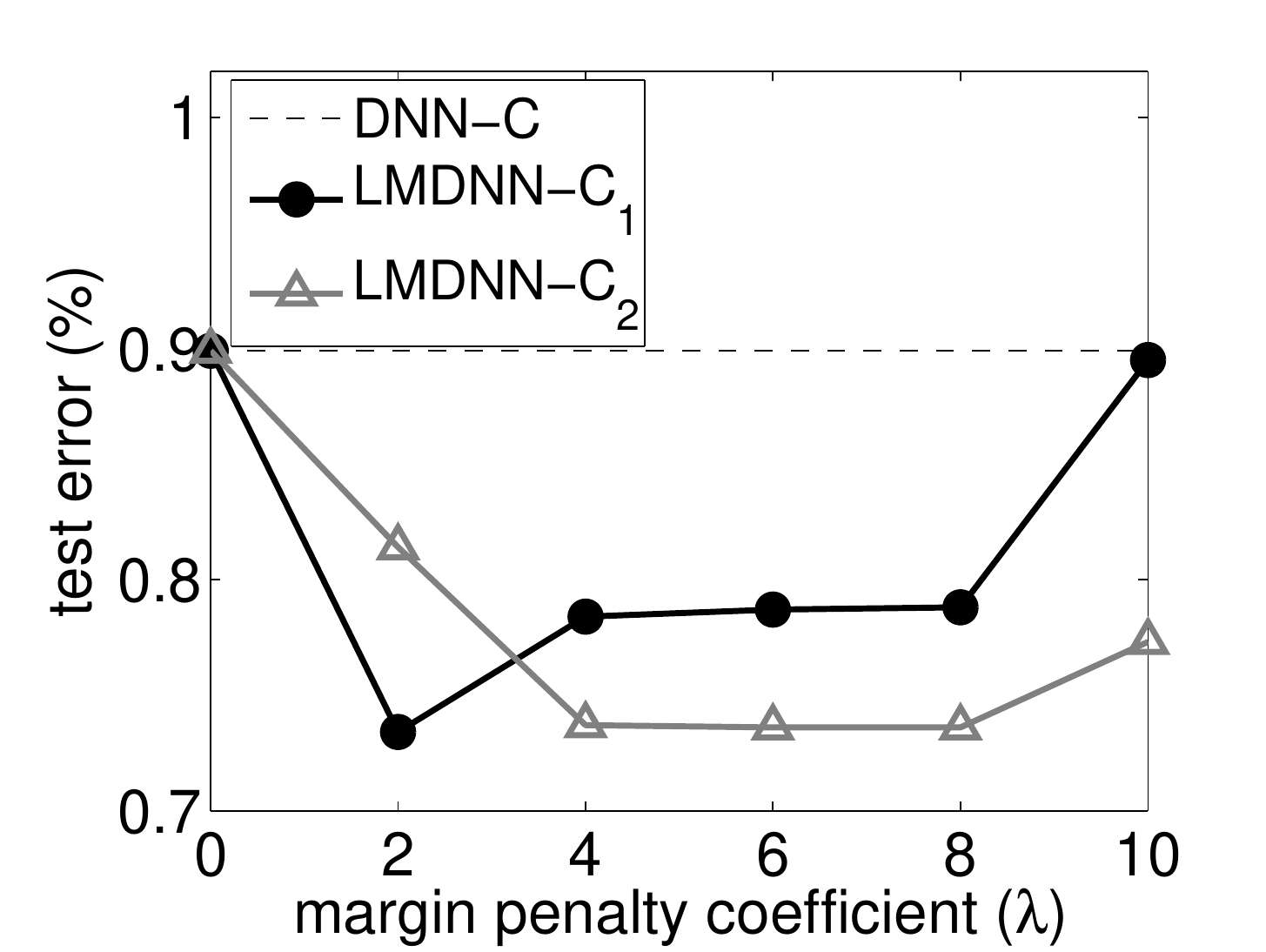}
			\label{subfig:mnist_lambda_alg} 
		}
		\subfloat[CIFAR-10]
		{
			\includegraphics[width=0.48\columnwidth]{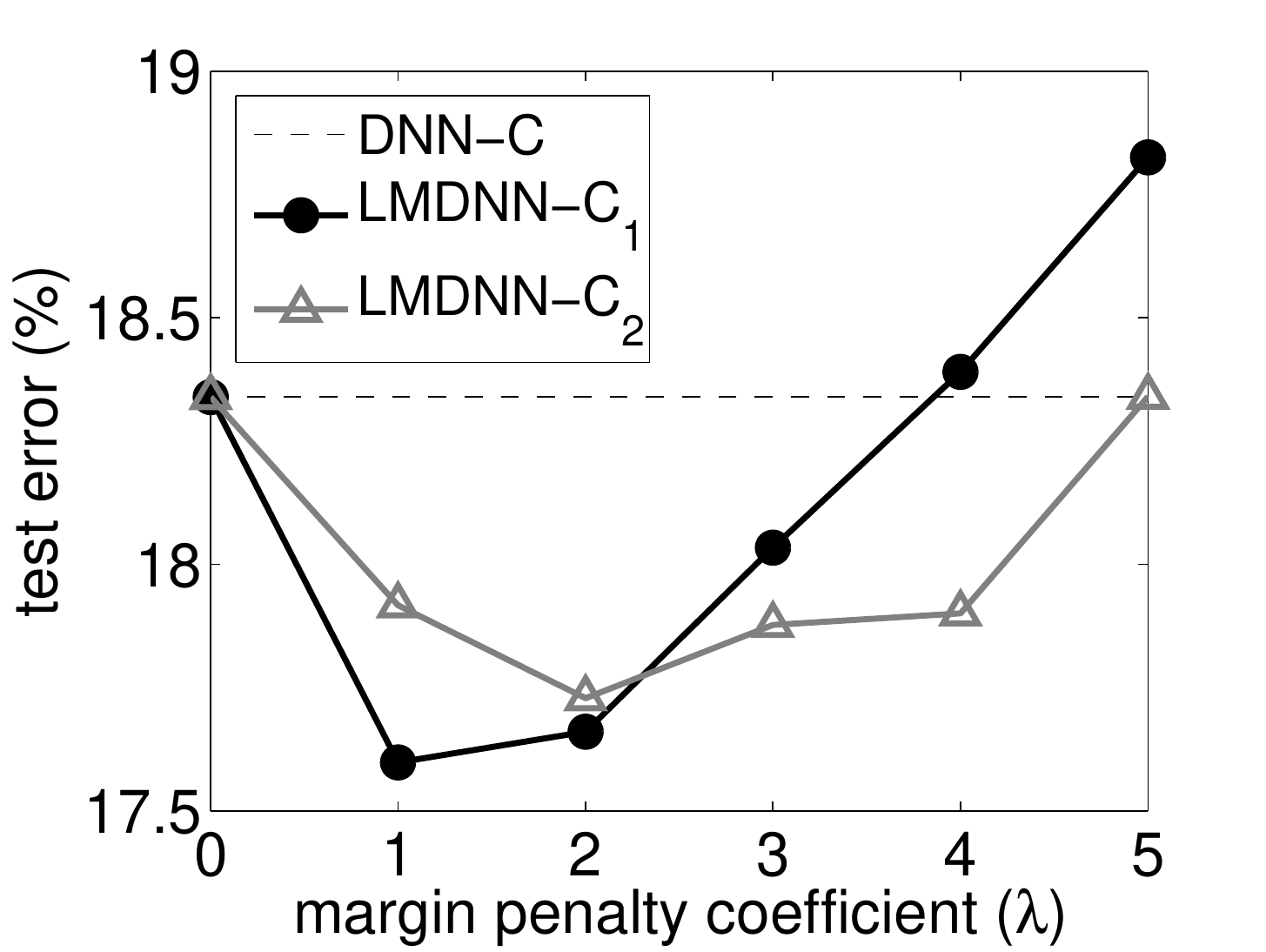} 
			\label{subfig:cifar_lambda_alg}
		} 
		\caption{Test error of LMDNNs with different $\lambda$.}
		\label{fig:lambda_alg}
	\end{figure}
	Now we compare the performances of LMDNNs with DNN-$C$. We used well-tuned network structures in the Caffe~\cite{jia2014caffe} tutorial (i.e., LeNet\footnote{\url{http://caffe.berkeleyvision.org/gathered/examples/mnist.html}} for MNIST and AlexNet\footnote{\url{http://caffe.berkeleyvision.org/gathered/examples/cifar10.html}} for CIFAR-10) and all the tuned hyper parameters on the validation set.
	
	Each model was trained for 10 times with different initializations. Table~\ref{tab:loss_test_error} shows mean and standard deviation of test error over the 10 learned models for DNN-$C$ and LMDNNs after tuning margin penalty coefficient $\lambda$. We can observe that, on both MNIST and CIFAR-10, LMDNNs achieve significant performance gains over DNN-$C$. In particular, LMDNN-$C_1$ reduce test error from $0.899\%$ to $0.734\%$ on MNIST and from $18.399\%$ to $17.598\%$ on CIFAR-10; LMDNN-$C_2$ reduce test error from $0.899\%$ to $0.736\%$ on MNIST and from $18.399\%$ to $17.728\%$ on CIFAR-10.
	
	To further understand the effect of adding margin-based penalty terms, we plot empirical margin errors of DNN-$C$ and LMDNNs in Figure~\ref{fig:comp_margin}. We can see that by introducing margin-based penalty terms, LMDNNs indeed achieve smaller empirical margin errors than DNN-$C$. Furthermore, the models with smaller empirical margin errors really have better test performances. For example, LMDNN-$C_{1}$ achieved both smaller empirical margin error and better test performance than LMDNN-$C_{2}$. This is consistent with Theorem~\ref{thm:multi_class_complexity}, and in return indicates reasonability of our theorem.
	
	We also report mean test error of LMDNNs with different margin penalty coefficient $\lambda$ (see Figure \ref{fig:lambda_alg}). In the figure, we use dashed line to represent mean test error of DNN-$C$ (corresponding to $\lambda=0$). From the figure, we can see that on both MNIST and CIFAR-10, (1) there is a range of $\lambda$ where LMDNNs outperform DNN-$C$; (2) although the best test performance of LMDNN-$C_{2}$ is not as good as that of LMDNN-$C_{1}$, the former has a broader range of $\lambda$ that can outperform DNN-$C$ in terms of the test error. This indicates the value of using LMDNN-$C_{2}$: it eases the tuning of hyper parameter $\lambda$; (3) with increasing $\lambda$, test error of LMDNNs will first decrease, and then increase. When $\lambda$ is in a reasonable range, LMDNNs can leverage both good the optimization property of cross entropy loss in training process and the effectiveness of margin-based penalty term, and thus achieve good test performance. When $\lambda$ becomes too large, margin-based penalty term dominates cross entropy loss. Considering that margin-based penalty term may not have good optimization property as cross entropy loss in the training process, the drop of test error is understandable. 
	
	\section{Conclusion and Future Work}\label{sec:conclusion}
	In this work, we have investigated the role of depth in DNN from the perspective of margin bound. We find that while the RA term in margin bound is increasing w.r.t. depth, the empirical margin error is decreasing instead. Therefore, arbitrarily increasing the depth might not be always good, since there is a tradeoff between the positive and negative impacts of depth on test performance of DNN. Inspired by our theory, we propose two large margin DNN algorithms, which achieve significant performance gains over standard DNN algorithm. In the future, we plan to study how other factors influence the test performance of DNN, such as unit allocations across layers and regularization tricks. We will also work on the design of effective algorithms that can further boost the performance of DNN.
	\section{Acknowledgments}
	Liwei Wang was partially supported by National Basic Research Program of China (973 Program) (grant no. 2015CB352502), NSFC(61573026), and a grant from MOE-Microsoft Laboratory of Statistics of Peking University. Xiaoguang Liu was partially supported by NSF of China (61373018, 11301288, 11450110409) and Program for New Century Excellent Talents in University (NCET130301).
	\bibliography{deepnets}
	\bibliographystyle{aaai}
	\begin{appendices}
		\section{Experiment Settings in Section 3.2}
		The MNIST dataset (for handwritten digit classification) consists of $28\times28$ black and white images, each containing a digit $0$ to $9$. There are $60$k training examples and $10$k test examples in this dataset. The CIFAR-10 dataset (for object recognition) consists of $32\times32$ RGB images, each containing an object, e.g., cat, dog, or ship. There are $50$k training examples and $10$k test examples in this dataset. For each dataset, we divide the $10$k test examples into two subsets of equal size, one for validation and the other for testing. In each experiment, we use standard sigmoid activation in hidden layers and train neural networks by mini-batch SGD with momentum and weight decay. All the hyper-parameters are tuned on the validation set.
		
		To investigate the influence of the network depth $L$, we train fully-connected DNN models with different depths and restricted number of hidden units. For simplicity and also following many previous works~\cite{simard2003best,hinton2012improving,glorot2011deep,ba2014deep}, we assume that each hidden layer has the same number of nodes in the experiment. Specifically, for MNIST and CIFAR-10, the DNN models with depth $2$, $3$, $4$, $5$ and $6$ respectively have $3000$, $1500$, $1000$, $750$ and $600$ units in each hidden layer when the total number of hidden units is $3000$.
		\section{Experimental Settings in Section 4.2}
		For data pre-processing, we scale the pixel values in MNIST to $[0,1]$, and subtract the per-pixel mean computed over the training set from each image in CIFAR-10. On both datasets, we do not use data augmentation for simplicity.
		
		For network structure, we used the well-tuned neural network structures as given in the Caffe tutorial (i.e., LeNet for MNIST and AlexNet) for CIFAR-10).
		
		For the training process, the weights are initialized randomly and updated by mini-batch SGD. We use the model in the last iteration as our final model. For DNN-$C$, all the hyper parameters are set by following Caffe tutorial. For LMDNNs, all the hyper parameters are tuned to optimal on the validation set. Finally, we find that by using the following hyper parameters, both DNN-$C$ and LMDNNs can achieve best performance as we reported. For MNIST, we set the batch size as $64$, the momentum as $0.9$, and the weight decay coefficient as $0.0005$. Each neural network is trained for $10$k iterations and the learning rate in each iteration $T$ decreases by multiplying the initial learning rate with a factor of $(1 + 0.0001T) ^ {-0.75}$. For CIFAR-10, we set the batch size as $100$, the momentum as $0.9$, and the weight decay coefficient as $0.004$. Each neural network is trained for $70$k iterations. The learning rate is set to be $10^{-3}$ for the first $60$k iterations, $10^{-4}$ for the next $5$k iterations, and $10^{-5}$ for the other $5$k iterations.
	\end{appendices}	
\end{document}